\newtheorem{theorem}{Theorem}
\newtheorem{lemma}{Lemma}
\def\BibTeX{{\rm B\kern-.05em{\sc i\kern-.025em b}\kern-.08em
    T\kern-.1667em\lower.7ex\hbox{E}\kern-.125emX}}
\begin{document}

\title{Faster Matrix Completion Using Randomized SVD\\
\thanks{This work is supported by National Natural Science Foundation of China (No. 61872206).}
}

\author{\IEEEauthorblockN{Xu Feng}
\IEEEauthorblockA{\textit{BNRist, Dept. Computer Science \& Tech.} \\
\textit{Tsinghua University}\\
Beijing, China \\
fx17@mails.tsinghua.edu.cn}
\and
\IEEEauthorblockN{Wenjian Yu}
\IEEEauthorblockA{\textit{BNRist, Dept. Computer Science \& Tech.} \\
\textit{Tsinghua University}\\
Beijing, China \\
yu-wj@tsinghua.edu.cn}
\and
\IEEEauthorblockN{Yaohang Li}
\IEEEauthorblockA{\textit{Dept. Computer Science} \\
\textit{Old Dominion University}\\
Norfolk, VA 23529, USA \\
yaohang@cs.odu.edu}
}

\maketitle

\begin{abstract}
Matrix completion is a widely used technique for image inpainting and personalized recommender system, etc. In this work, we focus on accelerating the matrix completion using faster randomized singular value decomposition (rSVD). Firstly, two fast randomized algorithms (rSVD-PI and rSVD-BKI) are proposed for handling sparse matrix. They make use of an eigSVD procedure and several accelerating skills. Then, with the rSVD-BKI algorithm and a new subspace recycling technique, we accelerate the singular value thresholding (SVT) method in \cite{Cai2010} to realize faster matrix completion. Experiments show that the proposed rSVD algorithms can be 6X faster than the basic rSVD algorithm \cite{Halko2011Finding} while keeping same accuracy. For image inpainting and movie-rating estimation problems (including up to $2\times10^7$ ratings), the proposed accelerated SVT algorithm consumes
15X and 8X less CPU time than the methods using \texttt{svds} and \texttt{lansvd} respectively, without loss of accuracy.
\end{abstract}

\begin{IEEEkeywords}
matrix completion, randomized SVD, image inpainting, recommender system.
\end{IEEEkeywords}

\section{Introduction}
The problem of matrix completion, or estimating missing values in a matrix, occurs in many areas of engineering and applied science such as computer vision, pattern recognition and machine learning \cite{Cai2010,fang2017,Zhang2012}. For example, in computer vision and image processing problems, recovering the missing or corrupted data can be regarded as matrix completion. A recommender system provides recommendations based on the user's preferences, which are often inferred with some ratings submitted by users. This is another scenario where the matrix completion can be applied.

The matrix which we wish to complete often has low rank or approximately low rank.  Thus, many existing  methods formulate the matrix completion as a rank minimization problem:
\begin{equation}
\begin{aligned}
& \min_\mathbf{X} rank(\mathbf{X}), ~ ~
s.t. ~ \mathbf{X}_{ij}= \mathbf{M}_{ij}, (i, j) \in \Phi,
\end{aligned}
\end{equation}
where $\mathbf{M}$ is the incomplete data matrix and $\Phi$ is the set of locations corresponding to the observed entries. This problem is however NP-hard in general.  A widely-used approach relies on the nuclear norm (i.e., the sum of singular values) as a convex relaxation of the rank operator. This results in a relaxed convex optimization, which can be solved with the singular value thresholding (SVT) algorithm \cite{Cai2010}. The SVT algorithm has good performance on both synthetic data and real applications. However, it involves large computational expense while handling large data set, because the singular values exceeding a threshold and the corresponding singular vectors need to be computed in each iteration step. Truncated singular value decomposition (SVD), implemented with \texttt{svds} in Matlab or \texttt{lansvd} in PROPACK \cite{propack}, is usually employed in the SVT algorithm \cite{Cai2010}. Another method for matrix completion is the inexact augmented Lagrange multiplier (IALM) algorithm \cite{Lin2010}, which also involves singular value thresholding and was originally proposed for the robust principal component analysis (PCA) problem \cite{robustPCA}. With artificially-generated low-rank matrices, experiments in  \cite{Lin2010} demonstrated that IALM algorithm could be several times faster than the SVT algorithm.

In recent years, randomized matrix computation has gained significant increase in popularity \cite{mahoney2011,Halko2011Finding,woodruff2014sketching,martinsson2016randomized,Erichson_2017_ICCV}. Compared with classic algorithms, the randomized algorithm involves the same or fewer floating-point operations (\emph{flops}), and is more efficient for truly large data sets. An idea of randomization is using random projection to identify the subspace capturing the dominant actions of a matrix. Then, a near-optimal low-rank decomposition of the matrix can be computed. A comprehensive presentation of the relevant techniques and theories are in \cite{Halko2011Finding}.  This randomized technique has been extended to compute PCA of data sets that are too large to be stored in RAM \cite{yu2017single}, or to speed up the distributed PCA \cite{Woodruff2014}. For general SVD computation, the approaches based on it have also been proposed \cite{rsvdpack,alg971}. They outperform the classic deterministic techniques for calculating a few of largest singular values and corresponding singular vectors. Recently, a compressed SVD (cSVD) algorithm was proposed \cite{Erichson_2017_ICCV}, which is based on a variant of the method in \cite{Halko2011Finding} but runs faster for image and video processing applications. It should be pointed out, these methods are not sufficient for accelerating the matrix completion. The SVT operation used in matrix completion requests accurate calculation of quite a large quantity of singular values. Thus, existing randomized SVD approaches cannot fulfill the accuracy requirement or cannot bring the runtime benefit. Besides, as sparse matrix is processed in matrix completion, special technique should be devised to make the randomized SVD approach really competitive.

In this work, we investigate the acceleration of matrix completion for large data using the randomized SVD techniques. We first review some existing acceleration skills for the basic randomized SVD (rSVD) algorithm, along with theoretic justification. Combining them we derive a  fast randomized SVD algorithm (called rSVD-PI) and prove its correctness. Then, utilizing these techniques and the block Krylov-subspace iteration (BKI) scheme\cite{musco2015} we propose a rSVD-BKI algorithm for highly accurate SVD of sparse matrix. Finally, for matrix completion we choose the SVT algorithm (an empirical comparison in Section IV.A shows its superiority to the IALM algorithm), and accelerate it with the rSVD-BKI algorithm and a novel subspace recycling technique. This results in a fast SVT algorithm with same accuracy and reliability as the original SVT algorithm. To demonstrate the efficiency of the proposed fast SVT algorithm, several color image inpainting and movie-rating estimation problems are tested. The results show that the proposed method consumes 15X and 8X less CPU time than the methods using \texttt{svds} and \texttt{lansvd} respectively, while outputting same-quality results.

For reproducibility, the codes and test data in this work will be shared on GitHub (\url{https://github.com/XuFengthucs/fSVT}).

\section{Preliminaries}
We assume that all matrices in this work are real valued, although the generalization
to complex-valued matrices is of no difficulty. In algorithm description, we  follow the Matlab convention for specifying row/column indices of a matrix.

\subsection{Singular Value Decomposition}

Singular value decomposition (SVD) is the most widely used matrix decomposition \cite{eckart1936,matrix2012}. Let $\mathbf{A}$ denote an $m \times n$ matrix. Its SVD is 
\begin{equation}
\mathbf{A}=\mathbf{U}\mathbf{\Sigma} \mathbf{V}^{\mathrm{T}} ,
\end{equation}
where orthogonal matrices $\mathbf{U}=[\mathbf{u}_1,\mathbf{u}_2,\cdots]$ and $\mathbf{V}=[\mathbf{v}_1,\mathbf{v}_2,\cdots]$ include the left and right singular vectors of $\mathbf{A}$, respectively. And, $\mathbf{\Sigma}$
is a diagonal matrix whose diagonal elements ($\sigma_1,\sigma_2,\cdots$) are the singular values of $\mathbf{A}$ in descending order. Suppose   $\mathbf{U}_k$ and $\mathbf{V}_k$ are the matrices with the first $k$ columns of  $\mathbf{U}$ and  $\mathbf{V}$, respectively, and $\mathbf{\Sigma}_k$ is a diagonal matrix containing the first $k$ singular values of $\mathbf{A}$. Then, we have the truncated SVD: 
\begin{equation}
\mathbf{A}\approx\mathbf{A}_k = \mathbf{U}_k\mathbf{\Sigma}_k\mathbf{V}_k^{\mathrm{T}} .
\end{equation}
It is  well known that this truncated SVD, i.e. $\mathbf{A}_k$, is the best rank-$k$ approximation of the matrix $\mathbf{A}$, in either spectral norm or Frobenius norm \cite{eckart1936}. 

To compute truncated SVD, a common choice is Matlab's built-in \texttt{svds} \cite{svds}. It is based on a Krylov subspace iterative method, and is especially efficient for handling sparse matrix. For a dense matrix $\mathbf{A}$, \texttt{svds} costs $O(mnk)$ flops for computing rank-$k$ truncated SVD. If $\mathbf{A}$ is sparse, the cost becomes $O(nnz(\mathbf{A})k)$ flops, where $nnz(\cdot)$ stands for the number of nonzeros of a matrix.  Another choice is PROPACK \cite{propack}, which is an efficient package in Matlab/Fortran for computing the dominant singular values/vectors of a large sparse matrix. The principal routine ``\texttt{lansvd}'' in PROPACK employs an  intricate Lanczos method to compute the singular values/vectors directly, instead of computing the eigenvalues/eigenvectors of an augmented matrix as in Matlab's built-in \texttt{svds}. Therefore, \texttt{lansvd} is usually several times faster than \texttt{svds}.

\subsection{Projection Based Randomized Algorithms}
The randomized algorithms have shown their advantages for solving the linear least squares problem and low-rank matrix approximation \cite{drineas2016randnla}. An idea is using random projection to identify the subspace capturing the dominant actions of matrix $\mathbf{A}$. This can be realized by multiplying $\mathbf{A}$ with a random matrix on its right side or left side, and then obtaining the subspace's orthonormal basis matrix $\mathbf{Q}$. With $\mathbf{Q}$, a low-rank approximation of $\mathbf{A}$ can be computed which further results in the approximate truncated SVD.
Because the dimension of the subspace is much smaller than that of  $range(\mathbf{A})$, this method facilitates the computation of near-optimal decompositions of $\mathbf{A}$. A basic randomized SVD (rSVD) algorithm is described as Algorithm 1  \cite{Halko2011Finding}.
\begin{algorithm}
    \caption{basic rSVD}
    \label{alg1}
    \begin{algorithmic}[1]
      \REQUIRE $\mathbf{A}\in\mathbb{R}^{m\times n}$, rank parameter $k$, power parameter $p$
      \ENSURE $\mathbf{U}\in\mathbb{R}^{m\times k}$, $\mathbf{S}\in\mathbb{R}^{k\times k}$, $\mathbf{V}\in\mathbb{R}^{n\times k}$
      \STATE $\mathbf{\Omega} = \mathrm{randn}(n, k+s)$
      \STATE $\mathbf{Q} = \mathrm{orth}(\mathbf{A}\mathbf{\Omega})$
      \FOR {$i=1, 2, \cdots, p$}
        \STATE $\mathbf{G} = \mathrm{orth}(\mathbf{A}^{\mathrm{T}}\mathbf{Q})$
        \STATE $\mathbf{Q} = \mathrm{orth}(\mathbf{A}\mathbf{G})$
      \ENDFOR
      \STATE $\mathbf{B}  = \mathbf{Q}^{\mathrm{T}}\mathbf{A}$
      \STATE $[\mathbf{U}, \mathbf{S}, \mathbf{V}] = \mathrm{svd}(\mathbf{B})$
      \STATE $\mathbf{U} = \mathbf{Q}\mathbf{U}$
      \STATE $\mathbf{U} = \mathbf{U}(:, 1:k), \mathbf{S} = \mathbf{S}(1:k, 1:k), \mathbf{V} = \mathbf{V}(:, 1:k)$.
    \end{algorithmic}
  \end{algorithm}

In Alg. 1, $\mathbf{\Omega}$ is a Gaussian i.i.d matrix. Other kinds of random matrix can replace $\mathbf{\Omega}$ to reduce the computational cost of $\mathbf{A\Omega}$, but they also bring some sacrifice on accuracy. With the subspace's orthogonal basis $\mathbf{Q}$, we have the approximation $\mathbf{A \approx QB}=\mathbf{QQ}^{\mathrm{T}}\mathbf{A}$. Then, performing the economic SVD on the $(k+s) \times n$ matrix $\mathbf{B}$ we obtain the approximate truncated SVD of $\mathbf{A}$.
To improve the accuracy of the QB approximation, a technique called power iteration (PI) scheme can be applied \cite{Halko2011Finding}, i.e. Steps 3$\sim$6. It is based on the fact that matrix $(\mathbf{AA}^\mathrm{T})^p \mathbf{A}$ has exactly the same singular vectors as $\mathbf{A}$, but its singular value decays more quickly. Thus, performing the randomized QB procedure on $(\mathbf{AA}^\mathrm{T})^p \mathbf{A}$ can achieve  better accuracy. The orthonormalization operation ``orth()'' is used to alleviate the round-off error in the floating-point computation.
More theoretical analysis can be found in \cite{Halko2011Finding}. 

The $s$ in Alg. 1 is an oversampling parameter, which enables $\mathbf{\Omega}$ with more than $k$ columns used for better accuracy. $s$ is  a small integer, 5 or 10. ``orth()'' is achieved  by a call to a packaged QR factorization (e.g., \texttt{qr(X, 0)} in Matlab).

The basic rSVD algorithm with the PI scheme has the following guarantee \cite{Halko2011Finding,musco2015}:
\begin{equation}
\|\mathbf{A}- \mathbf{Q}\mathbf{Q}^{\mathrm{T}}\mathbf{A} \| = \|\mathbf{A}- \mathbf{US}\mathbf{V}^{\mathrm{T}} \| \le (1+\epsilon ) \|\mathbf{A}- \mathbf{A}_k \| ,
\end{equation}
with a high probability ($\mathbf{A}_k$ is the best rank-$k$ approximation).

Another scheme called block Krylov-subspace iteration (BKI) can also be collaborated with the basic randomized QB procedure in Alg. 1. The resulted algorithm  satisfies (4) as well, and has better accuracy with same number of iteration ($p$ in Alg. 1). In \cite{musco2015}, it has been revealed that with the BKI scheme, the accuracy converges faster along with the iteration than using the PI scheme (Alg. 1). Specifically, the BKI scheme converges to the $(1+\varepsilon)$ low-rank approximation (4) in  $\tilde{O}(1/\sqrt{\varepsilon})$ iterations, while the PI scheme requires $\tilde{O}(1/\varepsilon)$ iterations.
This means that BKI based randomized SVD is more suitable for the scenario requiring higher accuracy. 

Some accelerating skills have been proposed to speed up the basic rSVD algorithm  \cite{Erichson_2017_ICCV,rsvdpack,alg971}, whose details will be addressed in the following section. However, they are developed individually and some of them just lack theoretic support. And, whether they are suitable for large sparse matrix is not well investigated.

\subsection{Matrix Completion Algorithms}
The matrix completion problem (1) is often relaxed to the problem minimizing the nuclear norm $\|\cdot\|_*$ of matrix:
\begin{equation}
\begin{aligned}
& \min_\mathbf{X} \|\mathbf{X}\|_* , ~ ~
s.t. ~ \mathcal{P}_{\Phi}(\mathbf{X})= \mathcal{P}_{\Phi}(\mathbf{M}),
\end{aligned}
\end{equation}
where $\mathcal{P}_{\Phi}(\cdot)$ is an orthogonal projector onto the span of matrices vanishing outside of set $\Phi$. 
The solution of (4) can be approached by an iterative process:
\begin{equation}
\begin{aligned}
\begin{cases}
\mathbf{X}^i= shrink( \mathbf{Y}^{i-1}, \tau),\\
\mathbf{Y}^i= \mathbf{Y}^{i-1}+ \delta \mathcal{P}_{\Phi}(\mathbf{M}-\mathbf{X}^i) .
\end{cases}
\end{aligned}
\end{equation}
Here, $\tau >0$, $\delta$ is a scalar step size, and $shrink(\mathbf{Y} ,\tau)$ is a function which applies a soft-thresholding rule at level $\tau$ to the singular values of matrix $\mathbf{Y}$. As the sequence $\{\mathbf{X}^i\}$ converges, one derives the singular value thresholding (SVT) algorithm for matrix completion (i.e. Algorithm 2) \cite{Cai2010}.
  \begin{algorithm}
    \caption{SVT}
    \label{alg2}
    \begin{algorithmic}[1]
      \REQUIRE  Sampled entries $\mathcal{P}_{\Phi}(\mathbf{M})$, tolerance parameter $\epsilon$
      \ENSURE $\mathbf{X}_{\mathrm{opt}}$
      \STATE $\mathbf{Y}^0 = c\delta\mathcal{P}_{\Phi}(\mathbf{M})$, ~$r_0 = 0$
      \FOR {$i=1, 2, \cdots, i_{\mathrm{max}}$}
        \STATE $k_i=r_{i-1}+1$
        \REPEAT
          \STATE $[\mathbf{U}^{i-1}, \mathbf{S}^{i-1}, \mathbf{V}^{i-1}] = \mathrm{svds}(\mathbf{Y}^{i-1}, k_i)$
          \STATE $k_i=k_i+l$
        \UNTIL $\mathbf{S}^{i-1}(k_i-l, k_i-l)\le \tau$
        \STATE $r_i = \mathrm{max}\{j:\mathbf{S}^{i-1}(j, j)>\tau\}$
        \STATE $\mathbf{X}^i=\sum^{r_i}_{j=1}(\mathbf{S}^{i-1}(j, j)-\tau)\mathbf{U}^{i-1}(:,j){(\mathbf{V}^{i-1}(:,j))}^{\mathrm{T}}$
        \STATE \textbf{if} $\left\|\mathcal{P}_{\Phi}(\mathbf{X}^i)\!-\!\mathcal{P}_{\Phi}(\mathbf{M})\right\|_{F}\!/\!\left\|\mathcal{P}_{\Phi}(\mathbf{M})\right\|_{F}\!<\!\epsilon$ \textbf{then break}
        \STATE $\mathbf{Y}^{i} = \mathbf{Y}^{i-1}+\delta(\mathcal{P}_{\Phi}(\mathbf{M})-\mathcal{P}_{\Phi}(\mathbf{X}^{i}))$
      \ENDFOR
      \STATE $\mathbf{X}_{\mathrm{opt}}=\mathbf{X}^{i}$
    \end{algorithmic}
  \end{algorithm}

In Alg. 2, ``svds($\mathbf{Y}, k$)''  computes rank-$k$ truncated SVD of $\mathbf{Y}$.
There are some internal parameters which follow the empirical settings in \cite{Cai2010}:
 $\tau=5n$, where $n$ is matrix column number,  $l=5$ and $c = \left\lceil \tau/(\delta\left\|\mathcal{P}_{\Phi}(\mathbf{M})\right\|_2)\right\rceil$. The value of $\delta$ affects the convergence rate, and one can slightly decrease it with the iteration. 

Due to space limit, we omit the details of  IALM algorithm  \cite{Lin2010}. In Section IV.A, with experiment we will show that the IALM is inferior to SVT algorithm for handling real data.

\section{Faster Randomized SVD for Sparse Matrix}
\subsection{The Ideas for Acceleration}
Because in each iteration of the SVT operation we need to compute truncated SVD of sparse matrix $\mathbf{Y}^{i-1}$, accelerating randomized SVD for sparse matrix is the focus. From Alg. 1, we see that Steps 2 and 7 occupy the majority of computing time if $\mathbf{A}$ is dense. However, for sparse matrix this is not true and optimizing other steps may bring substantial acceleration.

In existing work, some ideas were proposed to accelerate the basic rSVD algorithm.  In \cite{rsvdpack}, the idea of using eigendecomposition to compute the SVD in Step 8 of Alg. 1 was proposed. It was also pointed out that in the power iteration, orthonormalization after each matrix multiplication is not necessary. In \cite{alg971}, the power iteration was accelerated by replacing the QR factorization with LU factorization, and the Gaussian matrix is replaced with the random matrix with uniform distribution.
In \cite{Erichson_2017_ICCV}, the randomized SVD without power iteration was discussed for the dense matrix in image or video processing problem. It employs a variant of the basic rSVD algorithm, where the random matrix is multiplied to the left of $\mathbf{A}$. The algorithm is accelerated by using sparse random matrices and using eigendecomposition to obtain the orthonormal basis of the subspace.

Considering the situation for matrix completion, we decide only using the Gaussian matrix for $\mathbf{\Omega}$, because other choices are not suitable for sparse matrix (may cause $\mathbf{A\Omega}$ rank-deficient), and contribute little to the overall efficiency improvement. Other random matrix also degrades the accuracy of rSVD. The useful ideas for faster randomized SVD for sparse matrix are:
\begin{itemize}
\item use eigendecomposition for the economic SVD of $\mathbf{B}$;
\item perform orthonormalization after every other matrix-matrix multiplication in the power iteration;
\item perform LU factorization in the power iteration;
\item replace the orthonormal $\mathbf{Q}$ with the left singular vector matrix $\mathbf{U}$.
\end{itemize}


We first formulate the eigendecomposition based SVD computation as an eigSVD algorithm (described as Alg. 3), where ``eig()'' computes eigendecomposition. Its correctness is given as Lemma 1.
  \begin{algorithm}
    \caption{eigSVD}
    \label{alg3}
    \begin{algorithmic}[1]
      \REQUIRE $\mathbf{A}\in\mathbb{R}^{m\times n}$ ($m\ge n$)
      \ENSURE $\mathbf{U}\in\mathbb{R}^{m\times n}$, $\mathbf{S}\in\mathbb{R}^{n\times n}$, $\mathbf{V}\in\mathbb{R}^{n\times n}$
      \STATE $\mathbf{B} = \mathbf{A}^{\mathrm{T}}\mathbf{A}$
      \STATE $[\mathbf{V}, \mathbf{D}] = \mathrm{eig}(\mathbf{B})$
      \STATE $\mathbf{S} = \mathrm{sqrt}(\mathbf{D})$
      \STATE $\mathbf{U} = \mathbf{A}\mathbf{V}\mathbf{S}^{-1}$
    \end{algorithmic}
  \end{algorithm}
\begin{lemma}
The matrices $\mathbf{U, S, V}$ produced by Alg. 3 form the economic SVD of matrix $\mathbf{A}$.
\end{lemma}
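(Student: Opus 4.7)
The plan is to verify the three defining properties of an economic SVD for the output $(\mathbf{U},\mathbf{S},\mathbf{V})$ of Alg.~3: namely that $\mathbf{A}=\mathbf{U}\mathbf{S}\mathbf{V}^{\mathrm{T}}$, that $\mathbf{V}$ is orthogonal, that $\mathbf{U}$ has orthonormal columns, and that $\mathbf{S}$ is diagonal with nonnegative entries. The starting point is the observation that $\mathbf{B}=\mathbf{A}^{\mathrm{T}}\mathbf{A}$ is symmetric positive semidefinite, so the spectral theorem guarantees that Step~2 yields an orthogonal $\mathbf{V}\in\mathbb{R}^{n\times n}$ and a diagonal $\mathbf{D}\in\mathbb{R}^{n\times n}$ with nonnegative entries such that $\mathbf{B}=\mathbf{V}\mathbf{D}\mathbf{V}^{\mathrm{T}}$. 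In particular $\mathbf{S}=\sqrt{\mathbf{D}}$ is well defined, diagonal, and entrywise nonnegative.

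Next I would verify the reconstruction identity. Substituting Step~4 into $\mathbf{U}\mathbf{S}\mathbf{V}^{\mathrm{T}}$ gives $\mathbf{A}\mathbf{V}\mathbf{S}^{-1}\mathbf{S}\mathbf{V}^{\mathrm{T}}=\mathbf{A}\mathbf{V}\mathbf{V}^{\mathrm{T}}=\mathbf{A}$, using orthogonality of $\mathbf{V}$. For the orthonormality of $\mathbf{U}$, I would compute
\begin{equation*}
\mathbf{U}^{\mathrm{T}}\mathbf{U}=\mathbf{S}^{-1}\mathbf{V}^{\mathrm{T}}\mathbf{A}^{\mathrm{T}}\mathbf{A}\mathbf{V}\mathbf{S}^{-1}=\mathbf{S}^{-1}\mathbf{V}^{\mathrm{T}}\mathbf{B}\mathbf{V}\mathbf{S}^{-1}=\mathbf{S}^{-1}\mathbf{D}\mathbf{S}^{-1}=\mathbf{I}_n,
\end{equation*}
where the third equality uses $\mathbf{V}^{\mathrm{T}}\mathbf{B}\mathbf{V}=\mathbf{D}$ and the last uses $\mathbf{D}=\mathbf{S}^2$. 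Together with the assumption $m\ge n$, this establishes that $\mathbf{U}\in\mathbb{R}^{m\times n}$ is a matrix with orthonormal columns, completing the three conditions.

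The main subtlety, and the one place the argument must be handled carefully, is the existence of $\mathbf{S}^{-1}$ in Step~4: if $\mathbf{A}$ is rank deficient then some diagonal entries of $\mathbf{D}$ vanish and the formula for $\mathbf{U}$ is not well posed. I would therefore state the lemma under the implicit assumption that $\mathbf{A}$ has full column rank (which is the regime in which Alg.~3 is invoked inside the proposed rSVD variants, since the sketch $\mathbf{A}\boldsymbol{\Omega}$ or similar factors are almost surely of full rank in the Gaussian case). A brief remark can note that the ordering of the singular values produced by \texttt{eig} is not guaranteed to be descending, but a final permutation step can restore the canonical ordering without affecting any of the identities above.
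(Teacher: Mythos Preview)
Your proof is correct and takes a somewhat different route from the paper's. The paper begins by positing an SVD $\mathbf{A}=\mathbf{U}(:,1{:}n)\tilde{\mathbf{\Sigma}}\mathbf{V}^{\mathrm{T}}$ and then argues that the eigendecomposition in Step~2 must recover $\mathbf{D}=\tilde{\mathbf{\Sigma}}^2$ together with this same $\mathbf{V}$, so that Step~4 returns exactly $\mathbf{U}(:,1{:}n)$. You instead verify the defining axioms of an economic SVD directly from the spectral theorem for $\mathbf{A}^{\mathrm{T}}\mathbf{A}$, without appealing to a pre-existing decomposition. Your route is slightly more robust: when singular values repeat, the $\mathbf{V}$ returned by \texttt{eig} need not coincide with the $\mathbf{V}$ of any particular posited SVD, only with \emph{some} valid one, and the paper's identification glosses over this non-uniqueness; your property-checking argument is indifferent to which orthonormal eigenbasis \texttt{eig} selects. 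Both arguments implicitly require $\mathbf{A}$ to have full column rank so that $\mathbf{S}^{-1}$ exists; you make this explicit, while the paper acknowledges it only in the discussion following the proof.
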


\begin{proof}
Suppose $\mathbf{A}$ has SVD as (2). Since  $m\ge n$,
\begin{equation}
\mathbf{A}=\mathbf{U}(:,1:n)\mathbf{\tilde{\Sigma}}\mathbf{V}^{\mathrm{T}}, 
\end{equation}
where $\mathbf{\tilde{\Sigma}}$, a square diagonal matrix, is the first $n$ rows of $\mathbf{\Sigma}$. Eq. (7) is the economic SVD of $\mathbf{A}$.
Then, Step 1 computes
\begin{equation}
\mathbf{B} = \mathbf{A}^{\mathrm{T}}\mathbf{A}=\mathbf{V}\mathbf{\tilde{\Sigma}}^2 \mathbf{V}^{\mathrm{T}}. 
\end{equation}
The right-hand side is the eigendecomposition of $\mathbf{B}$. This means in Step 2, $\mathbf{D}=\mathbf{\tilde{\Sigma}}^2$ and $\mathbf{V}$ is the right singular vector matrix of $\mathbf{A}$. So, $\mathbf{S}$ in Step 3 equals $\mathbf{\tilde{\Sigma}}$, and lastly in Step 4 $\mathbf{U}= \mathbf{A}\mathbf{V}\mathbf{S}^{-1}= \mathbf{A}\mathbf{V}\mathbf{\tilde{\Sigma}}^{-1}= \mathbf{U}(:,1:n)$. The last equality is derived from (7). This proves the lemma.
\end{proof}

Notice that eigSVD is especially efficient if $m\gg n$, when $\mathbf{B}$ becomes a small $n\times n$ matrix. Besides, the singular values in $\mathbf{S}$ are in ascending order. Numerical issues can arise if matrix $\mathbf{A}$ has not full column rank. Though more efficient than standard SVD, eigSVD is only applicable to special situations.  

The idea that we can replace the orthonormal $\mathbf{Q}$ with the left singular matrix $\mathbf{U}$ can be explained with Lemma 2.
\begin{lemma}
In the basic rSVD algorithm, orthonormal matrix $\mathbf{Q}$ includes a set of orthonormal basis of subspace $range(\mathbf{A\Omega})$ or $range((\mathbf{AA}^\mathrm{T})^p\mathbf{A\Omega})$. No matter how $\mathbf{Q}$ is produced, the results of basic rSVD algorithm do not change.
\end{lemma}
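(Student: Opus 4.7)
I would split the lemma into two independent claims: (i) at the end of Algorithm~1 the column space of $\mathbf{Q}$ equals $range(\mathbf{A}\mathbf{\Omega})$ when no power iteration is performed ($p=0$) and $range((\mathbf{AA}^\mathrm{T})^p\mathbf{A}\mathbf{\Omega})$ in general; and (ii) the output triple $(\mathbf{U},\mathbf{S},\mathbf{V})$ depends on $\mathbf{Q}$ only through this subspace, not through the particular orthonormal basis chosen for it. Claims (i) and (ii) together give exactly the two assertions of Lemma~2.

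For claim (i), I would argue by induction on the power-iteration index $i$. The base case $i=0$ is immediate from Step~2, since $\mathrm{orth}(\cdot)$ returns, by definition, an orthonormal basis of the range of its argument, so $range(\mathbf{Q}) = range(\mathbf{A}\mathbf{\Omega})$. For the inductive step, suppose $range(\mathbf{Q}) = range((\mathbf{AA}^\mathrm{T})^{i-1}\mathbf{A}\mathbf{\Omega})$ at the top of iteration $i$. Using the elementary identity $range(\mathbf{A}^\mathrm{T}\mathbf{Q}) = \mathbf{A}^\mathrm{T}(range(\mathbf{Q}))$ together with the algebraic rewrite $\mathbf{A}^\mathrm{T}(\mathbf{AA}^\mathrm{T})^{i-1}\mathbf{A} = (\mathbf{A}^\mathrm{T}\mathbf{A})^i$, Step~4 gives $range(\mathbf{G}) = range((\mathbf{A}^\mathrm{T}\mathbf{A})^i\mathbf{\Omega})$, and Step~5 then yields $range(\mathbf{Q}) = range(\mathbf{A}\mathbf{G}) = range((\mathbf{AA}^\mathrm{T})^i\mathbf{A}\mathbf{\Omega})$, closing the induction.

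For claim (ii), the key observation is that the approximation reconstructed after Step~9 satisfies
\[
\mathbf{U}\mathbf{S}\mathbf{V}^\mathrm{T} = \mathbf{Q}\mathbf{U}_{\mathbf{B}}\mathbf{S}\mathbf{V}^\mathrm{T} = \mathbf{Q}\mathbf{B} = \mathbf{Q}\mathbf{Q}^\mathrm{T}\mathbf{A},
\]
where $\mathbf{U}_{\mathbf{B}}\mathbf{S}\mathbf{V}^\mathrm{T}$ is the SVD of $\mathbf{B}$ computed in Step~8. Since $\mathbf{Q}\mathbf{Q}^\mathrm{T}$ is the orthogonal projector onto $range(\mathbf{Q})$, and an orthogonal projector is uniquely determined by its range, the matrix $\mathbf{Q}\mathbf{Q}^\mathrm{T}\mathbf{A}$ is invariant under replacement of $\mathbf{Q}$ by any other orthonormal basis of the same subspace. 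Taking the SVD of this invariant matrix therefore gives the same singular values $\mathbf{S}$ and the same column and row spaces for $\mathbf{U}$ and $\mathbf{V}$.

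The main obstacle I anticipate is the familiar non-uniqueness of the SVD: if $\mathbf{B}$ has repeated singular values, or simply because each singular vector is only determined up to sign, two different orthonormalizations of $range(\mathbf{Q})$ could produce $\mathbf{U}$ matrices that differ by an orthogonal transformation inside the eigenspaces, even though the product $\mathbf{U}\mathbf{S}\mathbf{V}^\mathrm{T}$ is unchanged. I would resolve this by reading ``the results of the algorithm do not change'' as invariance of the low-rank approximation $\mathbf{U}\mathbf{S}\mathbf{V}^\mathrm{T}$, which is exactly the quantity that controls the error bound~(4), and would state this convention explicitly so the rest of the argument goes through without further technicality.
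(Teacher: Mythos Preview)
Your proposal is correct and follows essentially the same route as the paper: identify the algorithm's output as $\mathbf{Q}\mathbf{Q}^\mathrm{T}\mathbf{A}$ and invoke the uniqueness of the orthogonal projector onto $range(\mathbf{Q})$. You are more careful than the paper, which dismisses claim~(i) as ``obviously correct by observing Alg.~1'' without the induction, and which never raises the SVD non-uniqueness issue, implicitly reading ``results'' as the product $\mathbf{U}\mathbf{S}\mathbf{V}^\mathrm{T}$ exactly as you suggest.
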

\begin{proof}
The first statement is obviously correct by observing Alg. 1. The result of the basic rSVD algorithm is actually $\mathbf{QB}=\mathbf{QQ}^\mathrm{T}\mathbf{A}$, which further equals $\mathbf{USV}^\mathrm{T}$. Notice that $\mathbf{QQ}^\mathrm{T}$ is an orthogonal projector onto the subspace $range(\mathbf{Q})$, if $\mathbf{Q}$ is an orthonormal matrix. The orthogonal projector is uniquely determined by the subspace \cite{matrix2012}, here equals to $range(\mathbf{A\Omega})$ or $range((\mathbf{AA}^\mathrm{T})^p\mathbf{A\Omega})$. So, no matter how $\mathbf{Q}$ is produced, $\mathbf{QQ}^\mathrm{T}$ does not change, and the basic rSVD algorithm's results do not change.
\end{proof}

Both QR factorization and SVD of a same matrix produce the orthonormal basis of its range space (column space), in $\mathbf{Q}$ and $\mathbf{U}$ respectively. So, with Lemma 2, we can replace $\mathbf{Q}$ with $\mathbf{U}$ from SVD in the basic rSVD algorithm. 

Performing LU factorization is more efficient than QR factorization. It can be used while not affecting the correctness.
\begin{lemma}
In the basic rSVD algorithm, the ``orth()'' operation in the power iteration, except the last one, can be replaced by LU factorization. This does not affect the algorithm's accuracy in exact arithmetic.
\end{lemma}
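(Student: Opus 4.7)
The plan is to reduce the lemma to Lemma 2, which already guarantees that the output of the basic rSVD algorithm depends only on $range(\mathbf{Q})$ at the final step and not on the particular basis used to represent that subspace. Given this, it suffices to show that each intermediate ``\texttt{orth()}'' in Steps 3--6 can be swapped for an LU step without changing the column space it produces, while the terminal orthonormalization must be retained so that $\mathbf{Q}\mathbf{Q}^{\mathrm{T}}$ remains a genuine orthogonal projector.

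First I would fix the convention that ``LU factorization'' of a full-column-rank matrix $\mathbf{X}$ means writing $\mathbf{X}=\mathbf{L}\mathbf{U}$ as returned by a two-output call to Matlab's \texttt{lu}, where $\mathbf{L}$ absorbs any row permutation and $\mathbf{U}$ is upper triangular and invertible. Then $\mathbf{L}=\mathbf{X}\mathbf{U}^{-1}$, which immediately yields $range(\mathbf{L})=range(\mathbf{X})$. This single algebraic identity is what the entire argument rests on.

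Next I would traverse the power iteration loop inductively. Starting from $\mathbf{Q}_{0}=\mathrm{orth}(\mathbf{A}\mathbf{\Omega})$, each inner step forms either $\mathbf{A}^{\mathrm{T}}\mathbf{Q}_{i-1}$ or $\mathbf{A}\mathbf{G}_i$ and then orthonormalizes. Whether this step is realized by QR or by the LU convention above, the resulting basis matrix has the same range as its input, so the sequence of subspaces produced through all $p$ iterations is invariant and equal to $range((\mathbf{A}\mathbf{A}^{\mathrm{T}})^{p}\mathbf{A}\mathbf{\Omega})$. Keeping a genuine QR for the last orthonormalization before Step 7 ensures $\mathbf{Q}$ is orthonormal and $\mathbf{Q}\mathbf{Q}^{\mathrm{T}}$ is the orthogonal projector onto that subspace, so Lemma 2 applies verbatim and the final $\mathbf{U}\mathbf{S}\mathbf{V}^{\mathrm{T}}$ is unchanged.

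The main obstacle is a clean treatment of pivoting: the alternative factorization $\mathbf{P}\mathbf{X}=\mathbf{L}\mathbf{U}$ only gives $range(\mathbf{L})=range(\mathbf{P}\mathbf{X})$, which, viewed as a subspace of $\mathbb{R}^{m}$, is not $range(\mathbf{X})$, and this would derail the induction. I expect the cleanest fix is simply to insist on the Matlab convention $\mathbf{L}\mathbf{U}=\mathbf{X}$ with $\mathbf{L}$ psychologically lower triangular, which is exactly what is returned by the two-output form. The full-column-rank hypothesis on the intermediate products, needed to invert $\mathbf{U}$, is absorbed into the ``exact arithmetic'' qualifier in the lemma statement.
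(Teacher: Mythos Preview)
Your proposal is correct and follows essentially the same route as the paper: both reduce to Lemma~2 by showing that replacing ``\texttt{orth()}'' with LU preserves the column space of each iterate, hence the final range $range((\mathbf{AA}^{\mathrm{T}})^{p}\mathbf{A\Omega})$ is unchanged. The pivoting issue you flag as the ``main obstacle'' is handled in the paper exactly as you suggest---it writes $\mathbf{PK}=\mathbf{LU}$, observes $\mathbf{K}=(\mathbf{P}^{\mathrm{T}}\mathbf{L})\mathbf{U}$, and takes $\mathbf{P}^{\mathrm{T}}\mathbf{L}$ (Matlab's two-output \texttt{lu} return) as the replacement basis, so your worry and your fix both coincide with the paper's treatment.
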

\begin{proof}
Firstly, if the ``orth()'' is not performed, the power iteration produces $\mathbf{Q}$ including a set of basis of the subspace $range((\mathbf{AA}^\mathrm{T})^p\mathbf{A\Omega})$. As mentioned before, the ``orth()'' is just for alleviating the round-off error , and after using it $\mathbf{Q}$ still represents $range((\mathbf{AA}^\mathrm{T})^p\mathbf{A\Omega})$.
 
The pivoted LU factorization of a matrix $\mathbf{K}$ is:
\begin{equation}
\mathbf{PK=LU},
\end{equation}
where $\mathbf{P}$ is a permutation matrix, and $\mathbf{L}$ and $\mathbf{U}$ are lower triangular and upper triangular matrices respectively. Obviously, $\mathbf{K=(P}^\mathrm{T}\mathbf{L)U}$, where $\mathbf{P}^\mathrm{T}\mathbf{L}$ has the same column space as $\mathbf{K}$. So, replacing ``orth()'' with LU factorization (using $\mathbf{P}^\mathrm{T}\mathbf{L}$) also produces the basis of $range((\mathbf{AA}^\mathrm{T})^p\mathbf{A\Omega})$. Then, based on Lemma 2, we see  this does not affect the algorithm's results in exact arithmetic.
\end{proof}

Notice that the LU factor $\mathbf{P}^\mathrm{T}\mathbf{L}$ 
has scaled matrix entries with  linearly independent columns, since $\mathbf{L}$ is a lower-triangular matrix with unit diagonals and $\mathbf{P}$ just means row permutation. So, it also  alleviates the round-off error. Finally, the orthonormalization or LU factorization in the power iteration can be performed after every other matrix multiplication. It harms the accuracy little, but remarkably reduces runtime.


\subsection{Fast rSVD-PI Algorithm and rSVD-BKI Algorithm}
Based on the above discussion, we find out that the eigSVD procedure can be applied to the basic rSVD to produce both the economic SVD of $\mathbf{B}$ and the orthonormal $\mathbf{Q}$. Because in practice $k+s \ll m$ or $n$ and the matrices are not rank-deficient, using eigSVD induces no numerical issue. With these accelerating skills, we propose a fast rSVD-PI algorithm for sparse matrix (Alg. 4), where ``lu($\cdot$)'' denotes LU factorization function and its first output is ``$\mathbf{P}^\mathrm{T}\mathbf{L}$''.
  \begin{algorithm}
    \caption{rSVD-PI}
    \label{alg4}
    \begin{algorithmic}[1]
      \REQUIRE $\mathbf{A}\in\mathbb{R}^{m\times n}$, rank parameter $k$, power parameter $p$
      \ENSURE $\mathbf{U}\in\mathbb{R}^{m\times k}$, $\mathbf{S}\in\mathbb{R}^{k\times k}$, $\mathbf{V}\in\mathbb{R}^{k\times n}$
      \STATE $\mathbf{\Omega} = \mathrm{randn}(n, k+s)$
      \STATE $\mathbf{Q} = \mathbf{A}\mathbf{\Omega}$
      \FOR {$i=0, 1, 2, 3, \cdots, p$}
        \STATE \textbf{if} $i<p$ \textbf{then} [$\mathbf{Q}, \sim] = \mathrm{lu}(\mathbf{Q})$
        \STATE \textbf{else} $[\mathbf{Q}, \sim, \sim] = \mathrm{eigSVD}(\mathbf{Q})$ \textbf{break}
        \STATE $\mathbf{Q} = \mathbf{A}(\mathbf{A}^{\mathrm{T}}\mathbf{Q})$
      \ENDFOR
      \STATE $\mathbf{B}  = \mathbf{Q}^{\mathrm{T}}\mathbf{A}$
      \STATE $[\mathbf{V}, \mathbf{S}, \mathbf{U}] = \mathrm{eigSVD}(\mathbf{B}^{\mathrm{T}})$
      \STATE $ind = s+1:k+s$
      \STATE $\mathbf{U} = \mathbf{Q}\mathbf{U}(:, ind)$, $\mathbf{S} = \mathbf{S}(ind, ind), \mathbf{V} = \mathbf{V}(:, ind)$.
    \end{algorithmic}
  \end{algorithm}
\begin{theorem}
Alg. 4 is mathematically equivalent to the basic rSVD algorithm (Alg. 1).
\end{theorem}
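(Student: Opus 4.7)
The plan is to reduce the equivalence to the three lemmas already established. First I would observe that both algorithms compute the same final object, namely $\mathbf{Q}\mathbf{Q}^{\mathrm{T}}\mathbf{A}$ expressed in SVD form, and that this object is uniquely determined (by Lemma 2) once we know $range(\mathbf{Q})$. So the core of the proof is to verify that the $\mathbf{Q}$ produced by Alg. 4 at the end of the loop satisfies $range(\mathbf{Q}) = range((\mathbf{AA}^\mathrm{T})^p \mathbf{A\Omega})$, i.e., the same subspace that Alg. 1 identifies after $p$ power iterations, and that its columns are orthonormal so that $\mathbf{Q}\mathbf{Q}^{\mathrm{T}}$ is the correct orthogonal projector.

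Next I would track $range(\mathbf{Q})$ through the loop by induction on $i$. At entry, $\mathbf{Q} = \mathbf{A\Omega}$ so $range(\mathbf{Q}) = range(\mathbf{A\Omega})$. For each $i<p$, the LU step replaces $\mathbf{Q}$ by the factor $\mathbf{P}^\mathrm{T}\mathbf{L}$, which by Lemma 3 preserves the column space, and the subsequent multiplication $\mathbf{Q}\leftarrow \mathbf{A}(\mathbf{A}^\mathrm{T}\mathbf{Q})$ applies $\mathbf{AA}^\mathrm{T}$ to that basis, so $range(\mathbf{Q})$ advances from $range((\mathbf{AA}^\mathrm{T})^{i}\mathbf{A\Omega})$ to $range((\mathbf{AA}^\mathrm{T})^{i+1}\mathbf{A\Omega})$, assuming the intermediate matrices retain full column rank (which holds in exact arithmetic with a Gaussian $\mathbf{\Omega}$). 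After $p$ such iterations we hit $i=p$, where eigSVD$(\mathbf{Q})$ replaces $\mathbf{Q}$ by its left singular matrix; by Lemma 1 this matrix has orthonormal columns and the same range as the previous $\mathbf{Q}$. Combined with Lemma 2, this shows that the post-loop $\mathbf{Q}$ yields the same $\mathbf{Q}\mathbf{Q}^\mathrm{T}\mathbf{A}$ as Alg. 1.

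It then remains to verify that the final SVD extraction agrees. Step 8 computes $\mathbf{B}=\mathbf{Q}^\mathrm{T}\mathbf{A}$ as in Alg. 1, and by Lemma 1, applying eigSVD to $\mathbf{B}^\mathrm{T}$ returns the economic SVD $\mathbf{B}^\mathrm{T}=\mathbf{VSU}^\mathrm{T}$, equivalently $\mathbf{B}=\mathbf{USV}^\mathrm{T}$, which matches what \texttt{svd}$(\mathbf{B})$ produces in Alg. 1. The only bookkeeping subtlety is ordering: eigSVD stores singular values in ascending order, whereas Alg. 1 assumes descending order, so selecting columns $s{+}1{:}k{+}s$ in Alg. 4 is precisely the counterpart of selecting $1{:}k$ in Alg. 1. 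Finally, $\mathbf{U}\leftarrow \mathbf{Q}\mathbf{U}$ is common to both, so the returned triples $(\mathbf{U},\mathbf{S},\mathbf{V})$ agree up to the reordering of the $k$ retained singular triplets.

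The main obstacle I foresee is the range-space induction step: I have to be careful that the LU factor really preserves $range(\mathbf{Q})$ (needing $\mathbf{Q}$ to have linearly independent columns) and that applying $\mathbf{AA}^\mathrm{T}$ to \emph{any} basis of $range((\mathbf{AA}^\mathrm{T})^{i}\mathbf{A\Omega})$ gives exactly $range((\mathbf{AA}^\mathrm{T})^{i+1}\mathbf{A\Omega})$, not merely a subspace of it. This reduces to noting that if $\mathbf{Q}$ and $\mathbf{A\Omega}$ share the same column space through an invertible change of basis $\mathbf{M}$, then $\mathbf{AA}^\mathrm{T}\mathbf{Q}=\mathbf{AA}^\mathrm{T}\mathbf{A\Omega}\mathbf{M}^{-1}$, so the two ranges coincide; everything else is direct application of the three lemmas.
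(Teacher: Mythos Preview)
Your proposal is correct and follows essentially the same approach as the paper: both reduce the loop equivalence to Lemmas 2 and 3 (range preservation under LU and invariance of the output under different orthonormal bases of the same subspace), use Lemma 1 to justify eigSVD as both the final orthonormalization step and the SVD of $\mathbf{B}$, and handle the ascending-order bookkeeping for the index selection. Your version is simply more explicit about the induction on $i$ and the full-rank hypotheses than the paper's proof, which compresses all of this into a few sentences.
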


\begin{proof}
One difference between Alg. 4 and Alg. 1 is in the power iteration (the ``for'' loop). Based on Lemma 1 we see that eigSVD accurately produces a set of orthonormal basis. And, based on Lemma 2 and 3, we see the power iteration in Alg. 4 is mathematically equivalent to that in Alg. 1. The other difference is the last three steps in Alg. 4. Its correctness is due to Lemma 1 and that the singular values produced by eigSVD is in the ascending order.
\end{proof}


For the scenario requiring higher accuracy, the BKI scheme \cite{musco2015} should be employed. Its main idea is to accumulate the subspaces generated in every iteration to form a larger subspace. Combining the accelerating skills we propose a fast BKI based rSVD algorithm (rSVD-BKI), i.e. Alg 5. 
Because the number of columns of $\mathbf{H}$ in Alg. 5 can be much larger than $\mathbf{Q}$'s in Alg. 4,  we use ``orth()'' instead of eigSVD to produce $\mathbf{Q}$ finally. Similarly, we have the following theorem.
\begin{theorem}
Alg. 5 is mathematically equivalent to the original BKI algorithm in \cite{musco2015}.
\end{theorem}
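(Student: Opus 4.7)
The plan is to mirror the structure used in the proof of Theorem 1 and assemble the result from Lemmas 1, 2, and 3. The original BKI algorithm in \cite{musco2015} constructs a block Krylov matrix $\mathbf{K}=[\mathbf{A\Omega},\, (\mathbf{AA}^{\mathrm{T}})\mathbf{A\Omega},\, \ldots,\, (\mathbf{AA}^{\mathrm{T}})^p\mathbf{A\Omega}]$, orthonormalizes its columns to obtain a basis $\mathbf{Q}$, and then produces the approximate truncated SVD from $\mathbf{B}=\mathbf{Q}^{\mathrm{T}}\mathbf{A}$ via a standard SVD of $\mathbf{B}$. To prove equivalence, I would argue that Alg. 5 (a) builds an accumulated matrix $\mathbf{H}$ with $range(\mathbf{H}) = range(\mathbf{K})$, and (b) recovers the same truncated SVD of $\mathbf{QQ}^{\mathrm{T}}\mathbf{A}$.

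For (a), I would walk through the inner loop of Alg. 5. At iteration $i$, instead of applying \texttt{orth()}, the algorithm either applies an LU factorization or skips the orthonormalization entirely before multiplying by $\mathbf{AA}^{\mathrm{T}}$. By Lemma~3, the LU factor $\mathbf{P}^{\mathrm{T}}\mathbf{L}$ has the same column space as the input, and skipping the orthonormalization trivially preserves it. Therefore, in exact arithmetic, the $i$-th block stored into $\mathbf{H}$ still spans $range((\mathbf{AA}^{\mathrm{T}})^i\mathbf{A\Omega})$, and the concatenation satisfies $range(\mathbf{H}) = range(\mathbf{K})$. The final \texttt{orth()} then produces an orthonormal basis $\mathbf{Q}$ for this subspace, and Lemma~2 guarantees that the orthogonal projector $\mathbf{QQ}^{\mathrm{T}}$ is uniquely determined by this subspace; hence $\mathbf{QQ}^{\mathrm{T}}\mathbf{A}$ is identical to the projection produced by the original BKI.

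For (b), the step $\mathbf{B}=\mathbf{Q}^{\mathrm{T}}\mathbf{A}$ is structurally the same as in \cite{musco2015}. The final SVD of $\mathbf{B}$ (or of $\mathbf{B}^{\mathrm{T}}$, after which the left and right singular vector matrices are swapped and the columns indexed in reverse to account for the ascending order produced by eigSVD) is justified by Lemma~1, so the resulting $\mathbf{U}, \mathbf{S}, \mathbf{V}$ match those of the original BKI up to the conventional truncation and reordering of the top-$k$ triplets.

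The main obstacle will be the careful bookkeeping of subspaces across the loop: in particular, verifying that the ``orthonormalize-every-other-multiplication'' variant really leaves $range(\mathbf{H})$ unchanged when combined with the pivoted LU steps, and that the extraction of the top-$k$ triplets (through the index set $ind$) correctly accounts for oversampling and the ascending-order convention of eigSVD. Once these bookkeeping details are pinned down, equivalence follows directly from the three lemmas, exactly as in the proof of Theorem~1.
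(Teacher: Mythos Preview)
Your proposal is correct and follows essentially the same approach as the paper. The paper does not spell out a proof of Theorem~2 at all; it simply writes ``Similarly, we have the following theorem,'' deferring to the argument for Theorem~1, and your plan faithfully reproduces that argument---invoking Lemma~3 to show each LU step preserves the column space so that $range(\mathbf{H})=range(\mathbf{K})$, Lemma~2 to conclude the projector $\mathbf{QQ}^{\mathrm{T}}$ is unchanged, and Lemma~1 to justify the eigSVD step and the ascending-order index selection.
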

\begin{algorithm}
    \caption{rSVD-BKI}
    \label{alg5}
    \begin{algorithmic}[1]
      \REQUIRE $\mathbf{A}\in\mathbb{R}^{m\times n}$, rank parameter $k$, power parameter $p$
      \ENSURE $\mathbf{U}\in\mathbb{R}^{m\times k}$, $\mathbf{S}\in\mathbb{R}^{k\times k}$, $\mathbf{V}\in\mathbb{R}^{k\times n}$
      \STATE $\mathbf{\Omega} = \mathrm{randn}(n, k+s)$
      \STATE [$\mathbf{H}_{0}, \sim] = \mathrm{lu}(\mathbf{A}\mathbf{\Omega})$
      \FOR {$i=1, 2, 3, \cdots, p$}
        \STATE \textbf{if} $i<p$ \textbf{then} [$\mathbf{H}_{i}, \sim] = \mathrm{lu}(\mathbf{A}(\mathbf{A}^{\mathrm{T}}\mathbf{H}_{i-1}))$
      \ENDFOR
      \STATE $\mathbf{H} = [\mathbf{H}_0, \mathbf{H}_1, ..., \mathbf{H}_p]$
      \STATE $\mathbf{Q} = \mathrm{orth}(\mathbf{H})$
      \STATE $\mathbf{B}  = \mathbf{Q}^{\mathrm{T}}\mathbf{A}$
      \STATE $[\mathbf{V}, \mathbf{S}, \mathbf{U}] = \mathrm{eigSVD}(\mathbf{B}^{\mathrm{T}})$
      \STATE $ind = (k+s)(p+1)-k+1:(k+s)(p+1)$
      \STATE $\mathbf{U} = \mathbf{Q}\mathbf{U}(:, ind)$, $\mathbf{S} = \mathbf{S}(ind, ind), \mathbf{V} = \mathbf{V}(:, ind)$.
    \end{algorithmic}
  \end{algorithm}

Both Alg. 4 and Alg. 5 can accelerate the randomized SVD for sparse matrix. They do not reduce the major term in computational complexity, but have smaller scaling constants and reduce other terms. They also inherit the theoretical error bound of the original algorithms \cite{Halko2011Finding,musco2015}. Their accuracy and efficiency will be validated with experiments in Section V.A. Based on the rSVD-BKI, we will derive a fast SVT algorithm for matrix completion problems in the following section. 


\section{A Fast Matrix Completion Algorithm}

\subsection{The Choice of Algorithm}

To evaluate the quality of matrix completion, we consider the mean absolute error (MAE),
\begin{equation}
\mathrm{MAE} = \frac{\sum_{ij\in\Phi} |\mathbf{M}_{ij}-\tilde{\mathbf{M}}_{ij}|}{|\Phi|} ,
\end{equation}
where  $\mathbf{M}$ is the initial matrix, $\tilde{\mathbf{M}}$ is the recovered matrix, and $|\Phi|$ is the number of samples. MAE can be measured on the sampled matrix entries, or the whole matrix entries if the initial matrix is known. Before developing a faster matrix completion algorithm, we compare the  SVT and IALM  algorithms for recovering a $2,048\times 2,048$ color image from  20\%  pixels in it (i.e. Case 2 in Section V.II). The MAE curves along the iteration steps produced with SVT and IALM algorithms are shown in Fig. 1. From it we see that SVT achieves much better accuracy than
\begin{figure}[h]         
  \centering
\includegraphics[width=3.6in, height=1.8in]{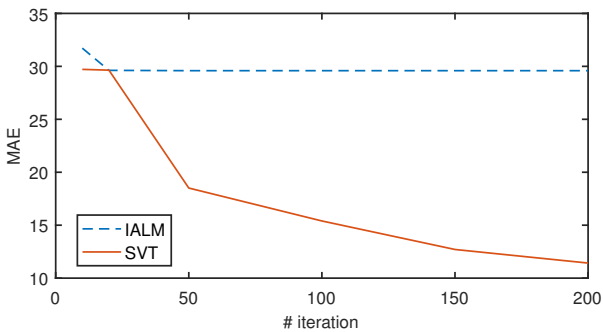}
  \caption{The accuracy convergence of SVT and IALM algorithms.}
  \label {Fig1}
\end{figure}
 IALM, though the latter converges faster. A probable reason is that the IALM algorithm works well on some low-rank data, instead of the real data.


In this work, we focus on the acceleration of the SVT algorithm.
A basic idea is replacing the truncated SVD in SVT algorithm with the fast rSVD algorithms in last section. However, with the iterations in SVT algorithm advancing, the rank parameter $k_i$ becomes very large. Calculating so many singular values/vectors accurately is not easy. Firstly, the rSVD-BKI algorithm is  preferable, which will be demonstrated with experiment in Section V.A. Secondly, with a large power $p$, its runtime advantage over \texttt{svds} or \texttt{lansvd} may lose, so that other accelerating technique is needed. 


\subsection{Subspace Recycling}
The SVT algorithm uses an iterative procedure to build up the low rank approximation, where truncated SVD is repeatedly carried out on $\mathbf{Y}^i$. According to Step 11 in Alg. 2,
\begin{equation}
\begin{aligned}
\left\|\mathbf{Y}^{i}\!-\!\mathbf{Y}^{i-1}\right\|_F\!=\!\delta\left\|\mathcal{P}_{\Phi}(\mathbf{M}\!-\!\mathbf{X}^{i})\right\|_F 
\!\le\!\delta \left\|\mathbf{M}\!-\!\mathbf{X}^{i}\right\|_{F}
\end{aligned}
\end{equation}
Because $\left\|\mathbf{M}-\mathbf{X}^{i}\right\|_{F} \to 0$ when iteration index $i$ becomes large enough (see Theorem 4.2 in \cite{Cai2010}), Eq. (11) means $\mathbf{Y}^{i}$ is very close to the $\mathbf{Y}^{i-1}$. So are the truncated SVD results of $\mathbf{Y}^{i}$ and $\mathbf{Y}^{i-1}$. The idea is to reuse the subspace of $\mathbf{Y}^{i-1}$ calculated in previous iteration step to speed up the SVD computation of $\mathbf{Y}^{i}$. This should be applied when $i$ is large enough. Two recycling strategies are: 
\begin{itemize}
\item Reuse the orthogonal basis $\mathbf{Q}$ in the rSVD-BKI for $\mathbf{Y}^{i-1}$, and then start from Step 8 in the rSVD-BKI algorithm for computing SVD of $\mathbf{Y}^{i}$.
\item Reuse the left singular vectors $\mathbf{U}^{i-1}$ in last iteration to calculate SVD of $\mathbf{Y}^{i}$, with the following steps. 
\begin{algorithm}
    \label{alg7}
    \begin{algorithmic}[1]
      \STATE $\mathbf{B} = {\mathbf{U}^{i-1}}^{\mathrm{T}}\mathbf{Y}^{i}$
      \STATE $[\mathbf{V}^{i}, \mathbf{S}^{i}, \mathbf{U}^{i}] = \mathrm{eigSVD}(\mathbf{B}^{\mathrm{T}})$
      \STATE $\mathbf{U}^{i} = \mathbf{U}^{i-1}\mathbf{U}^{i}$
    \end{algorithmic}
  \end{algorithm}
\end{itemize}

The second strategy costs less time, because the size of $\mathbf{U}^{i-1}$ is $m\times k$ while the size of $\mathbf{Q}$ is $m\times (p+1)(k+s)$. However, it is less accurate than the first one. So, the second strategy is suitable for the situation where the error reduces rapidly in the iterative process of SVT algorithm, e.g. the image inpainting problem.


\subsection{Fast SVT Algorithm}
Based on the proposed techniques, we obtain a fast SVT algorithm described as Alg. 6.
\begin{algorithm}
    \caption{fast SVT}
    \label{alg6}
    \begin{algorithmic}[1]
      \REQUIRE Sampled entries $\mathcal{P}_{\Phi}(\mathbf{M})$, tolerance $\epsilon$
      \ENSURE $\mathbf{X}_{\mathrm{opt}}$
      \STATE $\mathbf{Y}^0 = c\delta\mathcal{P}_{\Phi}(\mathbf{M})$, ~$r_0=0$, ~$q=0$, ~$p=3$
      \FOR {$i=1, 2, \cdots, i_{\mathrm{max}}$}
        \STATE $k_i=r_{i-1}+1$, adjust the value of $p$
        \REPEAT
          \STATE \textbf{if} $i<i_{\mathrm{reuse}}$ \textbf{or}  $q == q_{\mathrm{reuse}}$ \textbf{then} 
          \STATE ~~~$[\mathbf{U}^{i-1}, \mathbf{S}^{i-1}, \mathbf{V}^{i-1}] =$ rSVD-BKI$(\mathbf{Y}^{i-1}, k_i, p)$
          \STATE ~~~$q=0$
          \STATE \textbf{else}
          \STATE ~~~reuse $\mathbf{Q}$ or $\mathbf{U}$ in last execution of rSVD-BKI \\ ~~~algorithm and compute $\mathbf{U}^{i-1}, \mathbf{S}^{i-1}, \mathbf{V}^{i-1}$
          \STATE ~~~$q = q+1$
          \STATE \textbf{end if}
          \STATE $k_i=k_i+l$
        \UNTIL $\mathbf{S}^{i-1}(k_i-l,k_i-l)\le \tau$
        \STATE $r_i = \mathrm{max}\{j:\mathbf{S}^{i-1}(j,j)>\tau\}$
        \STATE $\mathbf{X}^i=\sum^{r_i}_{j=1}(\mathbf{S}^{i-1}(j,j)-\tau)\mathbf{U}^{i-1}(:,j){(\mathbf{V}^{i-1}(:,j))}^{\mathrm{T}}$
        \STATE \textbf{if} $\left\|\mathcal{P}_{\Phi}(\mathbf{X}^i)\!-\!\mathcal{P}_{\Phi}(\mathbf{M})\right\|_{F}\!/\!\left\|\mathcal{P}_{\Phi}(\mathbf{M})\right\|_{F}\!<\!\epsilon$ \textbf{then break}
        \STATE $\mathbf{Y}^{i} = \mathcal{P}_{\Omega}(\mathbf{Y}^{i-1})+\delta(\mathcal{P}_{\Phi}(\mathbf{M})-\mathcal{P}_{\Phi}(\mathbf{X}^{i}))$
      \ENDFOR
      \STATE $\mathbf{X}_{\mathrm{opt}}=\mathbf{X}^{i}$
    \end{algorithmic}
  \end{algorithm}
$i_{reuse}$ represents the minimum iteration to execute subspace recycling, and $q_{reuse}$ represents the maximum times of subspace recycling with one subspace. To guarantee the accuracy of randomized SVD, the power parameter $p$ should increase with the iteration because the rank $k_i$ of $\mathbf{Y}^i$ increases. Our strategy is increasing $p$ by 1 once the relative error in Step 16 increases.
This ensures a gradual decrease of error. And, if the error continuously decreases for 10 times, we reduce $p$ by 1. This prevents overstating $p$. 
Other parameters follow the settings for Alg. 2 (see Section II.B).

Here we would like to explain the convergence of the proposed fast SVT algorithm.  As proved in \cite{musco2015}, the BKI based randomized SVD is able to attain any high accuracy if $p$ is large enough. So is our rSVD-BKI algorithm.  In the fast SVT algorithm (Alg. 6), the $k$-truncated SVD is computed and $k$ increases with the iterations. We initially set a $p$ value which enables the rSVD-BKI algorithm attains same accuracy as \texttt{svds} for computing a few leading singular values/vectors. With the iteration advancing a mechanism gradually increasing $p$ value is applied, such that rSVD-BKI can accurately compute more leading singular values/vectors. As a result, this accurate SVD computation guarantees that Alg. 6 behaves the same as the original SVT algorithm using \texttt{svds}. On the other hand, Theorem 4.2 in \cite{Cai2010} proves the convergence of the original SVT algorithm. So, the convergence of our Alg. 6 is also guaranteed.

Notice that the subspace recycling technique is inspired by the theoretic analysis of (11). We have devised two recycling strategies and restrict their usage with parameters $i_{reuse}$ and $q_{reuse}$. They, to some extent, ensure that the accuracy in the fast SVT algorithm will not degrade after incorporating the subspace recycling. This has been validated with extensive experiments, some of which are given in Section V.II and V.III.


\section{Experimental Results}
All experiments are carried out on a computer with Intel Xeon CPU @2.00 GHz and 128 GB RAM. The algorithms have been implemented in Matlab 2016a. \texttt{svds} in Matlab and  \texttt{lansvd} in PROPACK \cite{propack} are used in Alg. 2, respectively. The resulted algorithms are compared with the proposed fast SVT algorithm (Alg. 6). The CPU time of different algorithms are compared, which is irrespective of the number of threads used in different SVT implementations.

The test cases for matrix completion are color images and movie-rating matrices from the MovieLens  datasets \cite{movielens}. Below, we first evaluate the proposed fast rSVD algorithms for sparse matrix and then validate the fast SVT algorithm.

\subsection{Validation of Fast rSVD Algorithms}
In this subsection, we first compare our rSVD-PI algorithm (Alg. 4) with the basic rSVD, cSVD (using \texttt{randn} as the random matrix) \cite{Erichson_2017_ICCV}, pcafast \cite{alg971}, rSVDpack \cite{rsvdpack} algorithms. We consider a sparse matrix in size 45,115 $\times$ 45,115 obtained from the  MovieLens dataset. The matrix has 97 nonzeros per row on average and is denoted by Matrix 1. Then, we randomly set some nonzero elements to zero to get two sparser matrices: Matrix 2 and 3 with 24 and 9 nonzeros per row on average, respectively. Setting rank $k=100$, we performed the truncated SVD with different algorithms. The results are listed in Table I. 
\begin{table*}[h]
\setlength{\abovecaptionskip}{0.05 cm}
 \caption{The Computational Results of Different Randomized SVD Algorithms ($k=100$). The Unit of CPU Time Is Second}
  \label{tab:table2}
  \centering
\renewcommand{\multirowsetup}{\centering}
  \begin{tabular}{ccccccccccc} 
  \toprule
 \multicolumn{2}{c}{Setting} &
  \multicolumn{3}{c}{Matrix 1} & \multicolumn{3}{c}{Matrix 2} &\multicolumn{3}{c}{Matrix 3}\\
  \cmidrule(r){1-2}
  \cmidrule(r){3-5} \cmidrule(r){6-8} \cmidrule(r){9-11}
   Algorithm & $p$ & $t_{cpu}$ & Error & Sp. & $t_{cpu}$ & Error & Sp. & $t_{cpu}$ & Error & Sp. \\
\midrule
 basic rSVD (Alg.1) & $0$ & 6.19 & 0.8166 & * & 5.10 & 0.9341 & * & 4.98 & 0.9506 & *\\
 cSVD\cite{Erichson_2017_ICCV} & $0$ & 2.76 & 0.8166 & 2.2 & 1.74 & 0.9352 & 2.9 & 1.51 & 0.9508 & 3.3\\
 pcafast\cite{alg971} & $0$ & 5.92 & 0.8188 & 1.0 & 5.04 & 0.9338 & 1.0 & 4.66 & 0.9506 & 1.1\\
 rSVDpack\cite{rsvdpack} & $0$ & 2.59 & 0.8186 & 2.4 & 1.67 & 0.9355 & 3.1 & 1.67 & 0.9506 & 3.0 \\
 rSVD-PI (Alg.4) & $0$ & \textbf{2.10} & 0.8156 & 3.0 & \textbf{1.10} & 0.9342 & 4.8 & \textbf{0.84} & 0.9502 & \textbf{6.0}\\
 \midrule
 basic rSVD (Alg.1) & $4$ & 18.7 & 0.7305 & *  & 13.2 & 0.8614 & * & 12.1 & 0.8804 & *\\
 cSVD\cite{Erichson_2017_ICCV} & $4$ & 14.9 & 0.7305 & 1.3 & 9.70 & 0.8614 & 1.4 & 8.51 & 0.8805 & 1.4\\
 pcafast\cite{alg971} & $4$ & 12.9 & 0.7305 & 1.5 & 8.36 & 0.8615 & 1.6 & 6.69 & 0.8805 & 1.8\\
 rSVDpack\cite{rsvdpack} & $4$ & 11.7 & 0.7305 & 1.6 & 6.32 & 0.8617 & 2.1 & 5.40 & 0.8804 & 2.2\\
 rSVD-PI (Alg.4) & $4$ & \textbf{8.32} & 0.7305 & 2.2 & \textbf{3.18} & 0.8615 & 4.2 & \textbf{2.02} & 0.8804 & \textbf{6.0}\\
\bottomrule 
 \end{tabular}
\end{table*}
Error there is the approximation error $\left\|\mathbf{A}-\tilde{\mathbf{A}}_k\right\|_F/\left\|\mathbf{A}\right\|_F$, where $\tilde{\mathbf{A}}_k$ denotes the computed rank-$k$ approximation.

From the table we see that the proposed rSVD-PI algorithm has same accuracy as the basic rSVD algorithm, but is from 2.2X to 6.0X faster (Sp. in Table I denotes the speedup ratio to the basic rSVD). And, for a sparser matrix the speedup ratio increases. If the power iteration is not imposed ($p=0$), cSVD and rSVDpack perform well, with at most 3.3X and 3.0X speedup respectively. When $p=4$, the speedup ratios of these methods decrease. 
However, rSVDpack is better, due to the improvement of power iteration. 
pcafast also shows moderate speedup because it  replaces QR with LU factorizaiton. These results verify the efficiency of our rSVD-PI algorithm for handling sparse matrix. It has up to \textbf{6.0X} speedup over the basic rSVD algorithm, and is several times faster than other state-of-the-art rSVD approaches. 

Considering the scenario needing high accuracy, we compare rSVD-PI and rSVD-BKI algorithms with various matrices. Different values of power parameter $p$ are tested and the results of \texttt{svds} are also given as the baseline. The results for Matrix 1 (setting $k=100$) are listed in Table II. From it, we see that rSVD-BKI can reach the accuracy of \texttt{svds} in shorter runtime and a smaller $p=4$. However, rSVD-PI cannot attain the accuracy of \texttt{svds} even when $p$ is as large as 15. The experimental results show that rSVD-BKI achieves better accuracy than rSVD-PI in shorter CPU time, with much smaller $p$. This verifies that the rSVD-BKI algorithm (Alg. 5) is more efficient than rSVD-PI for high-precision computation. 

As we have tested, to ensure the accuracy of SVD in the SVT iterations, the power $p$ can increase to several tens while using rSVD-PI algorithm or similar randomized algorithms. This largely increase the runtime and makes rSVD-PI and those algorithms in Table I no competitive advantage over the standard SVD methods. So, we can only use rSVD-BKI in the following matrix completion experiments.  

\begin{table}[h]
\setlength{\abovecaptionskip}{0.05 cm}
 \caption{The Comparison of rSVD-PI and rSVD-BKI Algorithms}
  \label{tab:table2}
  \centering
\renewcommand{\multirowsetup}{\centering}
  \begin{tabular}{cccc} 
  \toprule
  {Algorithm}  & {$t_{cpu}$ (s)} & Error & Sp.\\
\midrule
\texttt{svds}  & 75.0 & 0.7289 & * \\
rSVD-PI (Alg.4), $p=2$ & 5.20 & 0.7345  & 14\\
rSVD-PI (Alg.4), $p=15$ & 26.4 & 0.7290 & 2.8 \\
rSVD-BKI (Alg.5), $p=4$ & 22.0 & 0.7289 & 3.4 \\
\bottomrule 
 \end{tabular}
\end{table}

\subsection{Image Inpainting}
In this subsection, we test the matrix completion algorithms with a landscape color image. It includes $2,048\times 2,048$ pixels, and we stack the three color channels of it to get a matrix in size of $6,144\times 2,048$. Then, we randomly sample 10\% and 20\% pixels to construct Case 1 and Case 2 for image inpainting, respectively. The error of image inpainting is measured with the MAE on all image pixels. 

For the two cases, $\epsilon$ in the SVT algorithms is set 0.052 and 0.047 respectively. They correspond to the situation where the error of matrix completion does not decrease any more. The parameters for subspace recycling are $q_{reuse} = 10$, $i_{reuse}=100$. And, we use the second recycling strategy reusing $\mathbf{U}$ matrix. Our fast SVT algorithm is compared with the SVT algorithm using \texttt{svds} and \texttt{lansvd}, see Table III.
\begin{table}[h]
\setlength{\abovecaptionskip}{0.05 cm}
  \caption{The Results of Image Inpainting (Unit of CPU Time Is Second)}
  \label{tab:table3}
  \centering
\renewcommand{\multirowsetup}{\centering}
  \begin{tabular}{@{}c@{~}c@{~}c@{~}c@{~}c@{~}c@{~}c@{~}c@{~}c@{}} 
  \toprule
  \multirow{2}{*}{Test case}  &
  \multicolumn{3}{c}{SVT (Alg.2)} & \multicolumn{3}{c}{fast SVT (Alg.6)} &\multirow{2}{*}{Sp1} & \multirow{2}{*}{Sp2}\\
  \cmidrule(r){2-4} \cmidrule(r){5-7}
   & $t_{\texttt{svds}}$ & $t_{\texttt{lansvd}}$ & MAE & $t_{w/o}$ & $t_{w/}$ & MAE &  &  \\
\midrule
Case 1 & 10,674 & 6,295 & 11.69 & 1,812 & 944 & 11.69 & 11.3 & 6.7\\
Case 2 & 19,358 & 10,008 & 8.854 & 3,254 & \textbf{1,279} & 8.854 & \textbf{15.1} & \textbf{7.8} \\ 
\bottomrule 
 \end{tabular}
\end{table}

In Table III, $t_{\texttt{svds}}$ and $t_{\texttt{lansvd}}$ denote the CPU time of the SVT algorithms (Alg. 2) using \texttt{svds} and \texttt{lansvd} for truncated SVD, respectively. $t_{w/o}$ and $t_{w/}$ denote the CPU time of our fast SVT Algorithm (Alg. 6) without and with subspace recycling, respectively. Sp1 and Sp2 are the ratios of $t_\texttt{svds}$ and $t_\texttt{landsvd}$ to the CPU time of our algorithm (with subspace recycling). We can see that the proposed algorithm is up to \textbf{15.1X} and \textbf{7.8X} faster than the SVT algorithms using \texttt{svds} and \texttt{lansvd}, respectively. 
Its memory cost is 512 MB, slightly larger than 420 MB used by the SVD algorithm with \texttt{lansvd}. 
All algorithms present the same accuracy (same MAE value), with same iteration numbers (400 for Case 1 and 700 for Case 2). The rank of the result matrix is 109 for Case 1 or 102 for Case 2. Comparing $t_{w/o}$ and $t_{w/}$ we see that the subspace recycling technique brings about 2X speedup, while not degrading the accuracy.

The recovered images from Case 1 
are shown in Fig. 2, along with the original image. It reveals that our Alg. 6 produces same quality as the original SVT algorithm.
\begin{figure}[h]         
  \centering
  \setlength{\abovecaptionskip}{0.02 cm}
  \subfigure[Initial image] {\includegraphics[width=1.6in, height=1.6in]{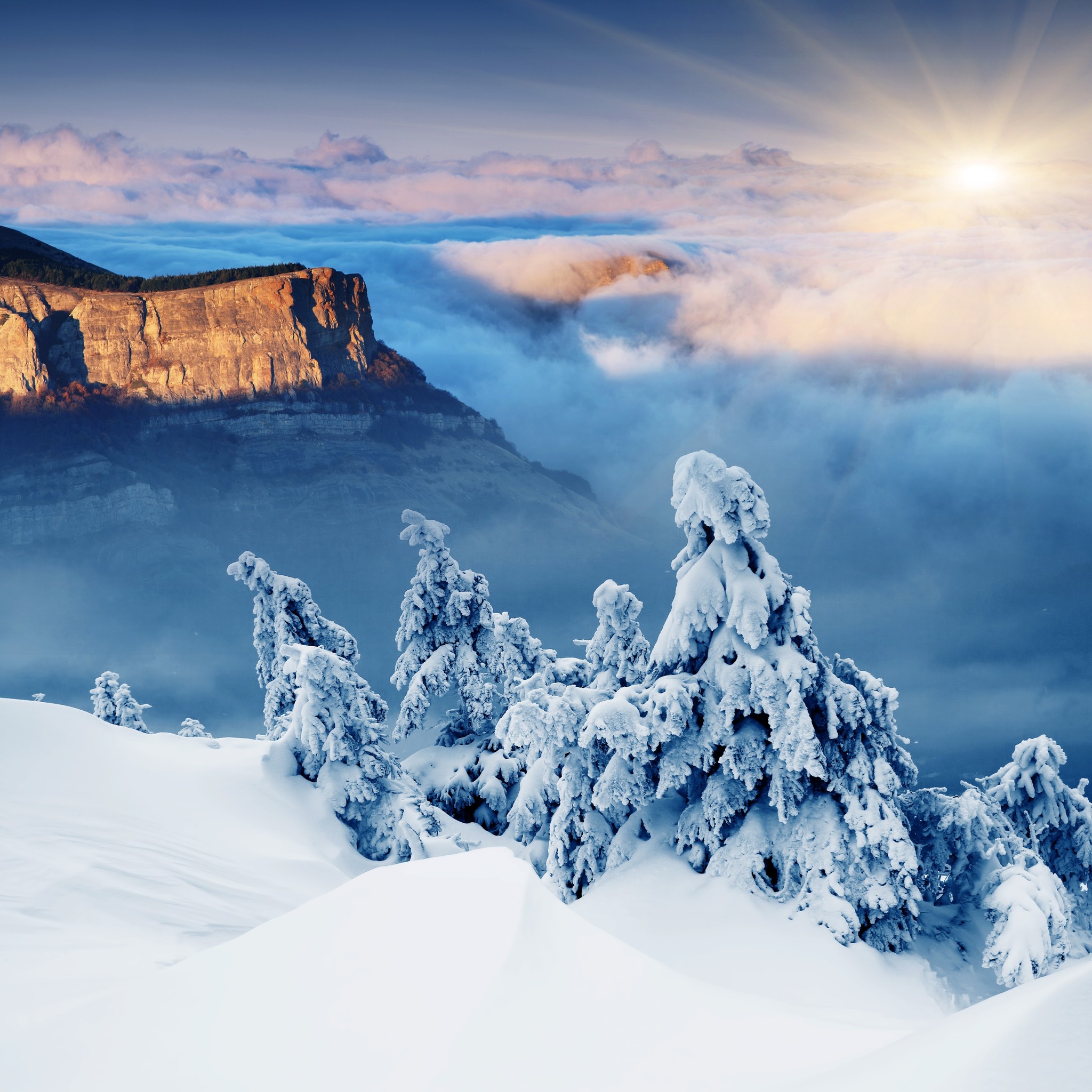}}
  \subfigure[Sampled 10\% pixels] {\includegraphics[width=1.6in, height=1.6in]{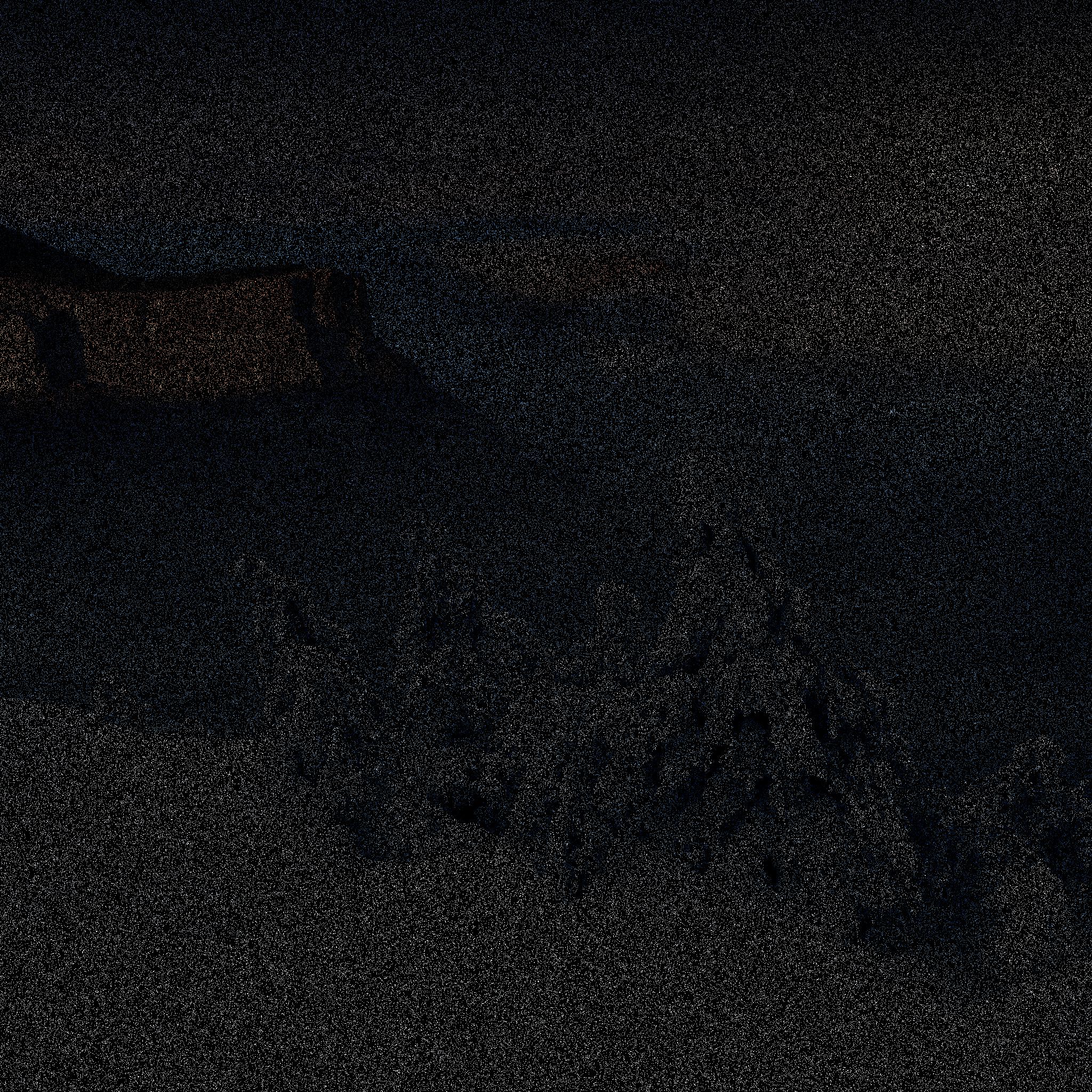}}
  \subfigure[Recovered with Alg. 2] {\includegraphics[width=1.6in, height=1.6in]{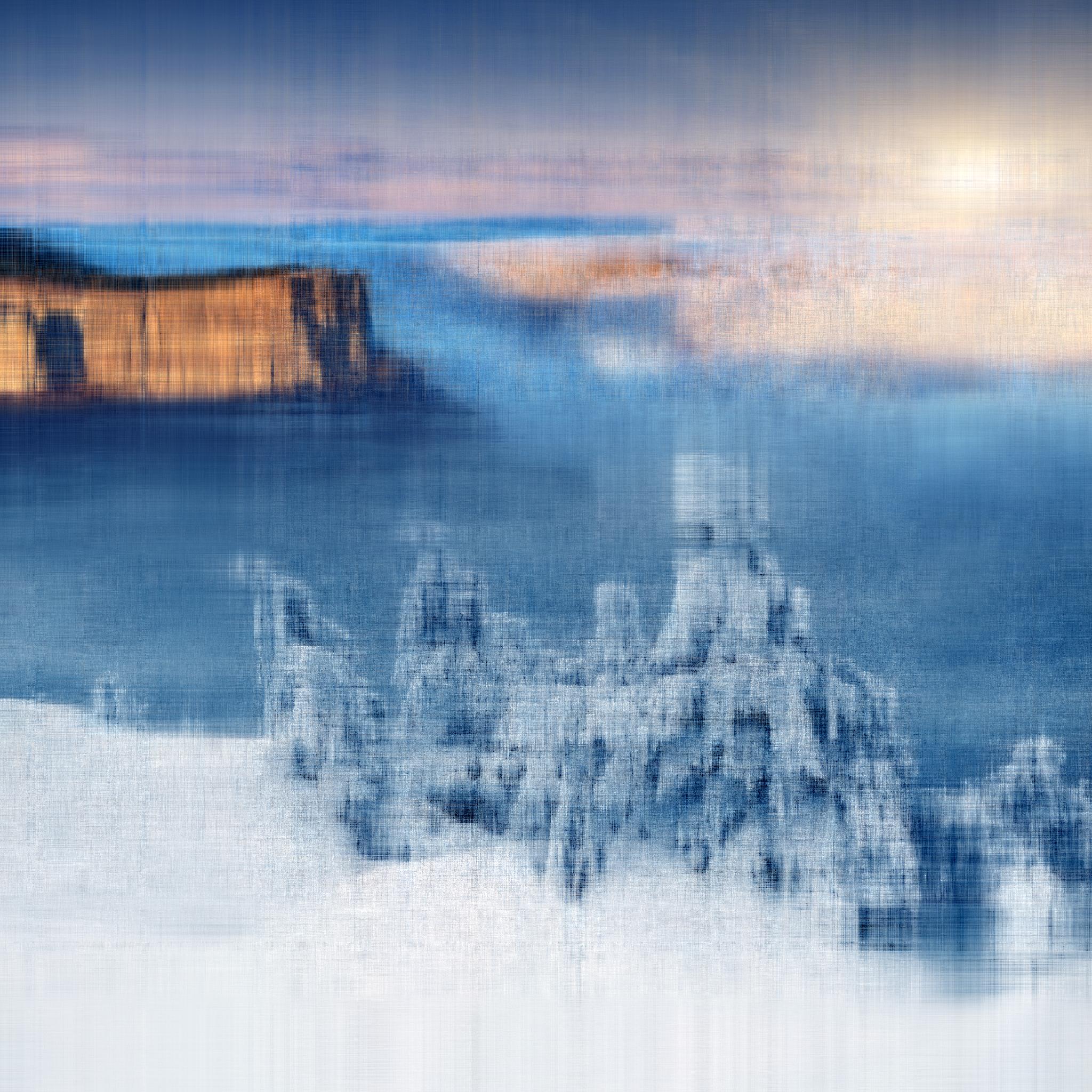}}
  \subfigure[Recovered with Alg. 6] {\includegraphics[width=1.6in, height=1.6in]{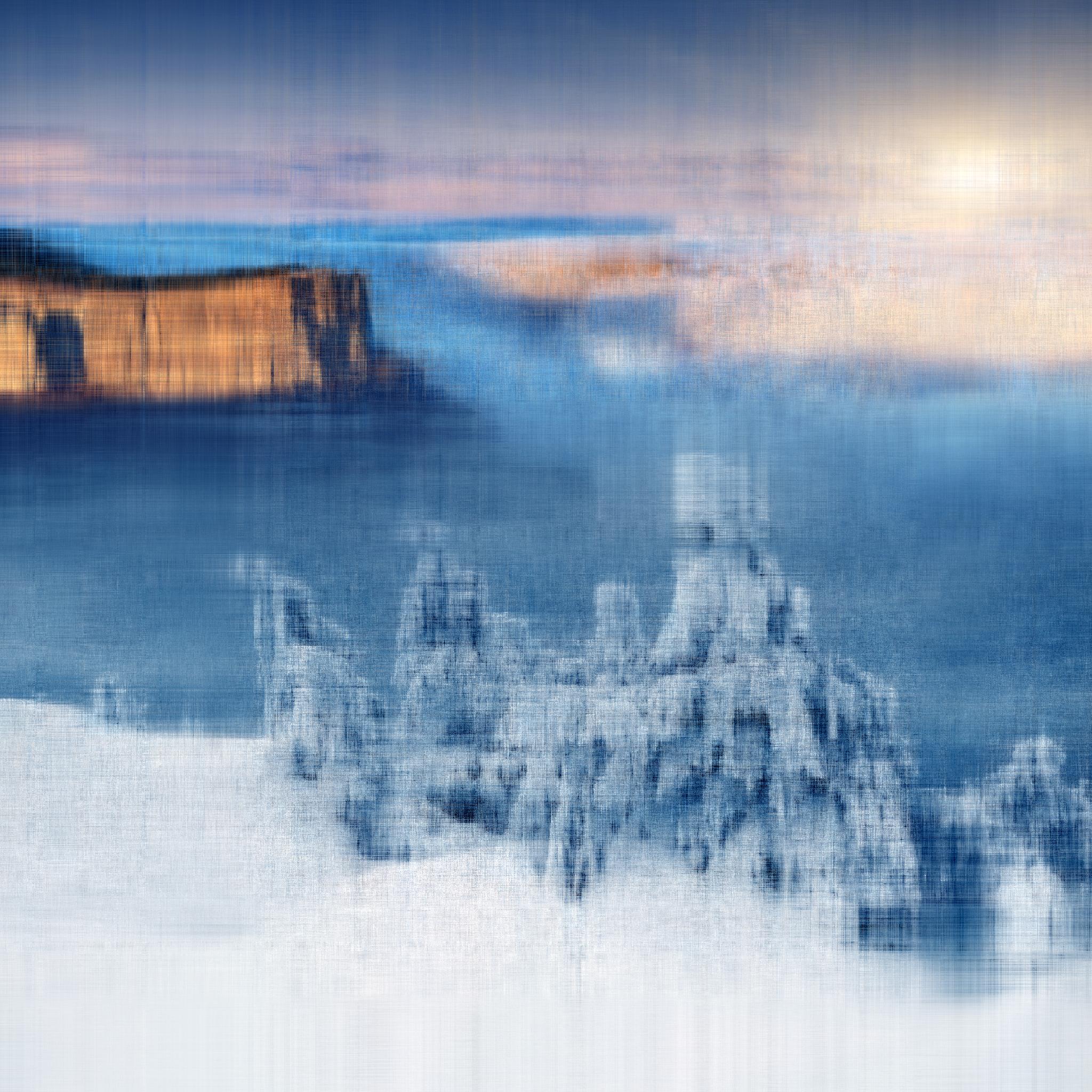}}
  \caption{The initial image and recovered images from 10\% pixels.}
  \label {Fig2}
\end{figure}

\subsection{Rating Matrix Completion}
The rating matrix includes users' ratings  to movies, ranged from 0.5 to 5. For each dataset we keep a portion of ratings to be the training set. With them we recover the whole rating matrix and then use the remaining ratings to evaluate the accuracy of the matrix completion.
In this experiment, $q_{reuse} = 10$, $i_{reuse}=50$, 
and the first subspace recycling strategy is used because it delivers better accuracy.

We first test a smaller dataset, including 10,000,054 ratings from 71,567 users judging 10,677 movies. We randomly sample 80\% and 90\% ratings as the training sets to obtain  Case 3 and Case 4, respectively. The $\epsilon$ in the SVT algorithms is 0.16 and 0.19 for the both cases respectively, corresponding to the situation where the error of matrix completion does not decrease any more. The experimental results are in Table IV. 
\begin{table}[h]
\setlength{\abovecaptionskip}{0.05 cm}
  \caption{The Results of Rating Matrix Completion for a Smaller Dataset}
  \label{tab:table6}
  \centering
\renewcommand{\multirowsetup}{\centering}
  \begin{tabular}{@{}c@{~}c@{~}c@{~}c@{~}c@{~}c@{~}c@{~}c@{}} 
  \toprule
  \multirow{2}{*}{Test case}  &
  \multicolumn{3}{c}{SVT (Alg.2)} & \multicolumn{2}{c}{fast SVT (Alg.6)} &\multirow{2}{*}{Sp1} & \multirow{2}{*}{Sp2}\\
  \cmidrule(r){2-4} \cmidrule(r){5-6}
   & $t_{\texttt{svds}}$ & $t_{\texttt{lansvd}}$ & MAE & ~ ~$t_{w/}$ & MAE &  &  \\
\midrule
Case 3 & 75,297 & 48,290 & 0.6498 & ~ ~15,133 & 0.6501 & 5.0 & 3.2 \\ 
Case 4 & 19,813 & 12,509 & 0.6458 & ~ ~\textbf{3,771} & 0.6460 & \textbf{5.3} & \textbf{3.3}\\
\bottomrule 
 \end{tabular}
\end{table}

According to Table IV, we see that the fast SVT algorithm has same accuracy as the original SVT algorithm. Here, MAE is measured on the remaining ratings. With the proposed techniques, the fast SVT algorithm is up to 5.3X and 3.3X faster than the methods using $t_\texttt{svds}$ and $t_\texttt{lansvd}$, respectively.
From MAE we see that the error of rating estimation is on average 0.65, which is moderate.

Then, we  test a larger dataset which includes 20,000,263 ratings from 138,493 users to 26,744 movies. It derives Case 5 and Case 6 by sampling  80\% and 90\% known ratings. The computational results are listed in Table V. They confirm the accuracy of the proposed algorithm again, and show its speedup up to 4.8X. It should be pointed out that the number of iterations to achieve the best quality in SVT algorithms are 362 for Case 5 and 293 for Case 6, which are larger than 208 for Case 3 and 153 for Case 4. But the ranks of the result matrix are 58 for Case 5 and 45 for Case 6 which are much smaller than 239 for Case 3 and 138 for Case 4. This explains why the CPU time for handling the larger dataset is less than that for handling the smaller dataset.
\begin{table}[h]
\setlength{\abovecaptionskip}{0.05 cm}
  \caption{The Results of Rating Matrix Completion for a Larger Dataset}
  \label{tab:table7}
  \centering
\renewcommand{\multirowsetup}{\centering}
    \begin{tabular}{@{}c@{~}c@{~}c@{~}c@{~}c@{~}c@{~}c@{~}c@{}} 
  \toprule
  \multirow{2}{*}{Test case}  &
  \multicolumn{3}{c}{SVT (Alg.2)} & \multicolumn{2}{c}{fast SVT (Alg.6)} &\multirow{2}{*}{Sp1} & \multirow{2}{*}{Sp2}\\
  \cmidrule(r){2-4} \cmidrule(r){5-6}
   & $t_{\texttt{svds}}$ & $t_{\texttt{lansvd}}$ & MAE & ~ ~$t_{w/}$ & MAE &  &  \\
\midrule
Case 5 & 30,213 & 15,213 & 0.6676 & ~ ~6,582 & 0.6676 & 4.6 & 2.3\\ 
Case 6 & 19,951 & 9,785 & 0.6685 & ~ ~4,180 & 0.6685 & 4.8 & 2.3\\
\bottomrule 
 \end{tabular}
\end{table}

\section{Conclusions}
In this paper, we have presented two contributions. Firstly, a fast randomized SVD technique is proposed for sparse matrix. It results in two fast rSVD algorithms: rSVD-PI and rSVD-BKI. The former is faster than all existing approaches and up to 6X faster than the basic rSVD algorithm, while the latter is even better for problem requiring higher accuracy. 
 Then, utilizing the rSVD-BKI, we propose a fast SVT algorithm for matrix completion. It also includes a new subspace recycling technique and is applied to the problems of image inpainting and rating matrix completion. The experiments with real data show that the proposed algorithm brings up to 15X speedup without loss of accuracy.

In the future, we will explore the application of this fast matrix completion algorithm to more AI problems.


\bibliographystyle{IEEEtran}
\bibliography{IEEEfull, ictai\_sub4}

\end{document}